\pgfplotsset{compat=newest}
\newtheorem{proposition}{Proposition}
\def\method{LION}
\title{\method{}: Implicit Vision Prompt Tuning}
\title{\method{}: Implicit Vision Prompt Tuning}
\author {
Haixin Wang\textsuperscript{\rm 1},
Jianlong Chang\textsuperscript{\rm 2, $\dagger$},
Yihang Zhai\textsuperscript{\rm 1},
Xiao Luo\textsuperscript{\rm 3},\\
Jinan Sun\textsuperscript{\rm 1, $\dagger$}, 
Zhouchen Lin\textsuperscript{\rm 4, 5},
Qi Tian\textsuperscript{\rm 2}
}
\begin{document}

\maketitle

\begin{abstract}
Despite recent promising performances across a range of vision tasks, vision Transformers still have an issue of high computational costs.
Recently, vision prompt learning has provided an economical solution to this problem without fine-tuning the whole large-scale model. 
However, the efficiency and effectiveness of existing models are still far from satisfactory due to the parameter cost of extensive prompt blocks and tricky prompt framework designs. 
In this paper, we propose a light-weight prompt framework named imp\underline{L}icit v\underline{I}sion pr\underline{O}mpt tu\underline{N}ing (\textbf{\method{}}), which is motivated by deep implicit models with stable low memory costs for various complex tasks.
In particular, we merely insert two equilibrium implicit layers in two ends of the pre-trained backbone with parameters frozen. Moreover, according to the lottery hypothesis, we further prune the parameters to relieve the computation burden in implicit layers. Various experiments have validated that our \method{} obtains promising performances on a wide range of datasets. Most importantly, \method{} reduces up to 11.5 \% of training parameter numbers while obtaining higher performance than the state-of-the-art VPT, especially under challenging scenes. Furthermore, we find that our proposed \method{} has an excellent generalization performance, making it an easy way to boost transfer learning in the future.
\end{abstract}

\section{Introduction}

With the development of computer vision, models with more robust representations and larger sizes have been developed. Despite this, training these models with many parameters is becoming increasingly challenging.

One common approach to addressing this issue is pre-training on a large dataset, such as ImageNet~\cite{deng2009imagenet}, for general vision tasks and then fine-tuning the model on downstream tasks to improve performance. While this method has been widely used, several drawbacks should be considered. Firstly, fine-tuning requires a large amount of computational resources, especially for large models such as ViT-B~\cite{dosovitskiy2020image} (85.84M parameters) and Swin-B~\cite{liu2021swin} (86.87M parameters).  
Secondly, the model may become overfitted to the small target dataset and cannot be used for other tasks after fine-tuning. This phenomenon means separate sets of model parameters are needed for each task, leading to a high storage requirement.

In recent years, prompt-based learning has generated considerable interest in the natural language processing (NLP) community because of its fantastic performance on various downstream problems~\cite{liu2021pre,li2021prefix,lester2021power}. 
Prompt tuning aims to design a trainable lightweight block as a supplementary input, which can guide or direct the generation of powerful vision representations to achieve desirable performances, rather than fine-tuning pre-trained models to adapt to downstream tasks. 

\begin{figure}[t]
    \centering
    \includegraphics[width=0.43\textwidth]{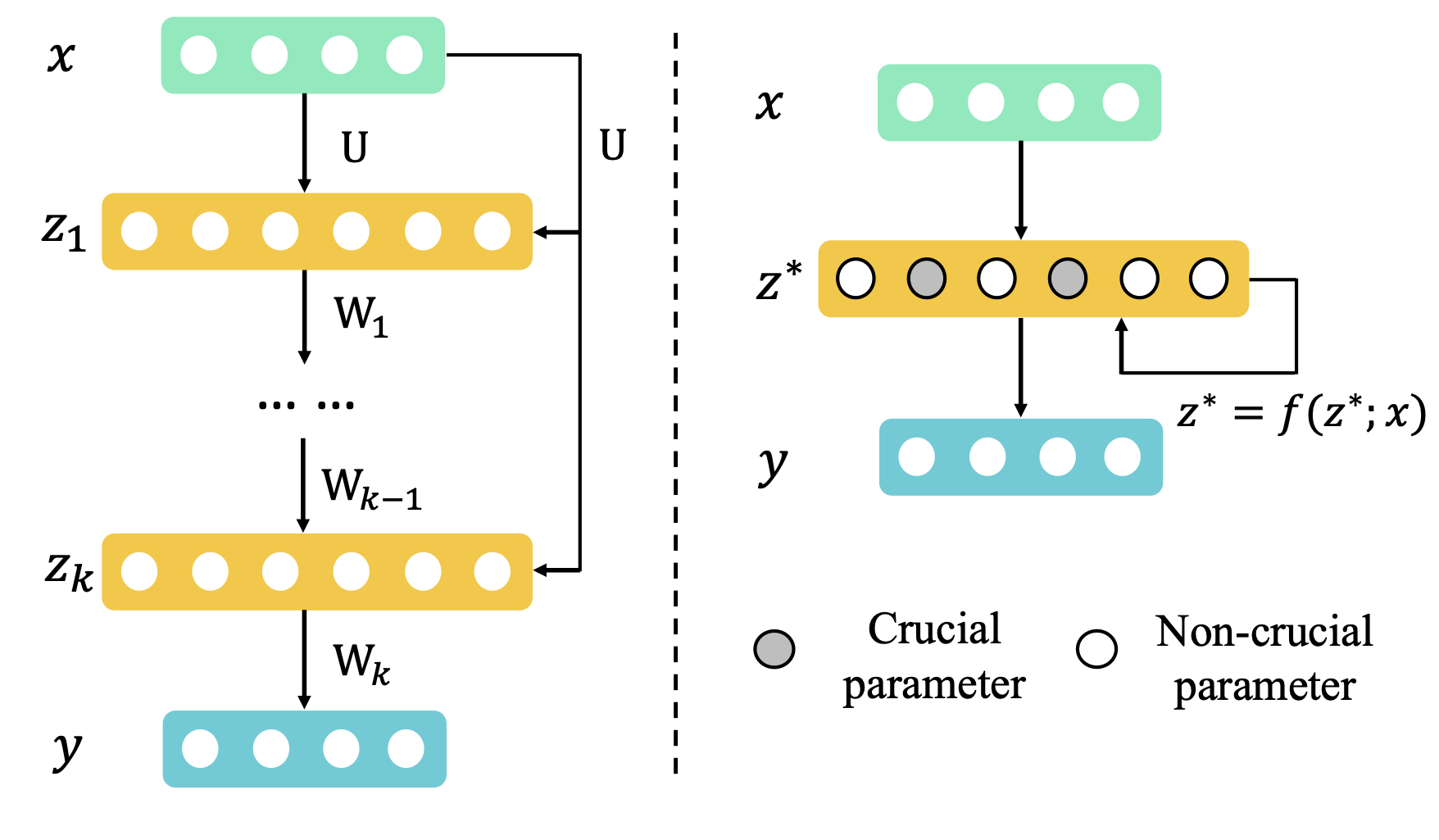}
    \caption{Demonstration of the implicit vision prompt layer. The left part shows the traditional construction of the prompt block by stacking MLPs. The right is \method{} with the implicit equilibrium layer and robust training for the prompt block.  }
    \label{fig:intro}
\end{figure}

To design a prompt framework that combines both a lightweight architecture and strong representation capability, we conducted a comprehensive study and analysis of the limitations of current vision prompt tuning methods.
\textbf{First}, existing approaches insert trainable networks as prompt blocks between each layer of the network \cite{jia2022visual}, assuming that the feature representations from different levels contribute to the network's generalization performance, especially for low- and mid-level representations. This, however, goes against the lightweight design philosophy of prompt tuning. The architecture design is also complex and heavily reliant on tuning skills, making applying to various vision backbone models with different architectures challenging.
\textbf{Second}, finding the right prompt is a challenging task that often takes a significant amount of time. Even small changes in the activation input can significantly impact performance. This can be attributed to the depth of big model architectures, making the trainable parameters of shallow network layers more challenging to train and converge.

Based on the challenges above, we naturally raise a question, \textit{\textbf{can we design a single-layer network as the prompt block with favorable convergence to iterate continuously?}} We hope it can achieve the effect of multi-layer network training and thus significantly reduce the training parameters. 
Therefore, we propose the imp\underline{L}icit v\underline{I}sion pr\underline{O}mpt tu\underline{N}ing (\textbf{\method{}}), which is motivated by deep implicit models with stable low memory costs for various complex tasks. In particular, we merely insert two equilibrium implicit layers in two ends of the pre-trained backbone with parameters frozen. \method{} enables tuning various vision models, including convolutional neural networks (CNNs) and vision transformers.  

Specifically, \method{} constructs a lightweight prompt framework to generate task-specific discriminative prompts for each downstream input image. \method{} can generate a compact and robust downstream model that adapts tuning demands across a wide range of vision tasks while only training lightweight additional parameters per downstream task, which is implemented by blending the input and the representations with the learned prompts. 
Besides, since the hyper-parameters, like the learning rate, can severely affect the robustness of training the vision prompts, we prune the parameters in these two layers according to the lottery hypothesis. Only the critical parameters are kept in training to avoid over-fitting. 
More surprising is that \method{} essentially compresses the parameters of the existing vision prompt network, which allows it to be generalized to any subsequent vision prompt tuning method with trainable parameters.

Our proposed \method{} can be used to tune CNN-based and Transformer-based vision models, surpassing fine-tuning for various recognition tasks of image classification. It can perform well under a variety of practical scenarios, including generic objects, class imbalance, and few-shot learning. 
Learning vision-specific information while maintaining the pretrained knowledge, our \method{} delivers an average improvement of up to 1.3\% compared with VPT, but with much fewer trainable parameters. In summary, the main contributions of our work are three-fold:
\begin{itemize}
    \item We propose \method{}, a significantly lightweight yet effective tuning method that leverages few trainable implicit layers to adapt the pretrained model to downstream visual tasks.
    \item We construct a more robust optimization strategy by lottery hypothesis while proving the excellent convergence of our method through theoretical analysis and validation.
    \item Experimental results show that \method{} outperforms previous vision prompt tuning methods, evidencing its feasibility and usefulness for vision models, especially on few-shot and long-tail scenarios.
\end{itemize}
 \section{Related Work}

\begin{figure}[htbp]
    \centering
    \includegraphics[width=0.46\textwidth]{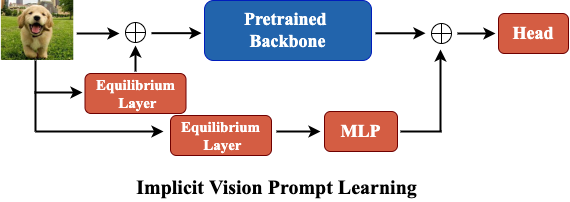}
    \caption{Structures of our \method{}. We add two implicit layers, which are only injected in front of the input and behind the output of the pre-trained backbone respectively, as the vision prompts to enrich the vision input and representation. }
    \label{fig:framework}
\end{figure}

\subsection{Vision Fine-tuning}
Recently, more and more works fine-tune the pre-trained models as the backbone for downstream tasks \cite{zaken2022bitfit,croce2022adversarial,wortsman2022model,yu2023visual}, which are trained on large-scale image datasets for common tasks like image classification.
Fine-tuning is highly flexible: it can be applied to new input domains or tasks with different output semantics.

Some work has focused on developing fine-tuning methods that allow for the adaptation of the entire network~\cite{houlsby2019parameter,pfeiffer2020adapterfusion,pfeiffer2020adapterhub}, rather than just a subset of layers.
Another line of related work has focused on developing fine-tuning methods that can be used in a transfer learning setting~\cite{ijcai2022p769,sung2022vl,lin2020exploring,houlsby2019parameter}, where the pre-trained model is adapted to a new task in a different domain.

\subsection{Prompt-based Learning}
Prompt-based learning~\cite{liu2021pre} is a technique that utilizes task-specific descriptions to enhance the understanding of downstream tasks by pre-trained models. This approach was popularized by the GPT series~\cite{brown2020language,radford2018improving,radford2019language} in the field of NLP. This has led to many studies focused on developing effective prompt strategies for extracting knowledge from pre-trained language models. Similarly, recent vision-language models~\cite{lester2021power,gao2020making,li2021prefix,schick2020exploiting,yang2022diffusion,yang2023improving,yu2023visual,wang2023parameter} have achieved impressive performance on various vision tasks without the need for fine-tuning. However, these prompt-based tuning methods are not suitable for pre-trained vision models. Our work aims to bridge this gap by developing a parameter-efficient prompt tuning approach specifically for vision models, to adapt frozen pre-trained vision models to downstream tasks across a broad distribution.

\subsection{Deep Implicit Models}
There has been a growing interest in using implicit layers in deep learning. Researchers have explored different approaches to utilizing numerical analysis methods to replace the representation mechanism in existing deep networks. Some notable examples include SparseMAP \cite{niculae2018sparsemap}, OptNet \cite{amos2017optnet}, and SATNet \cite{wang2019satnet}. These approaches have shown promising results in improving the efficiency and performance of deep learning models.
One particular type of implicit model that has gained attention is Deep Equilibrium Models (DEQ) \cite{bai2019deep}. DEQ is an implicit model with infinite depth, yet it is interesting as a single-layer network because it allows for analytical backpropagation through the equilibrium point. Regardless of the depth of the network, training and predicting with DEQ only require constant memory. Moreover, DEQ has achieved comparable performance with efficient memory cost, as illustrated in \cite{xie2021optimization}.
Another advantage of DEQ is its interpretability. The use of implicit models can make it difficult to interpret the behavior of the network, but DEQ provides a transparent mechanism for understanding the network's inner workings. These advantages make DEQ a suitable candidate for constructing lightweight vision prompt layers.

\section{Methodology}

\subsection{Overall Architecture}
In vision prompt tuning, the goal is to adapt a pre-trained vision model to downstream tasks without modifying its weights and achieve comparable results with the commonly used fine-tuning method. 
Mathematically, given a pre-trained large-scale vision model $\mathcal{G}$ with parameters $\Theta$, it can be decomposed into two parts, the backbone $\mathcal{F}$ with frozen parameters $\Theta_f$ and the head layer $\mathcal{H}$ with trainable parameters $\Theta_h$. Thus, the input image $x_i \in \mathbb{R}^{H \times W \times 3}$ from the down-stream dataset $\mathcal{D}=\{(x_i,y_i)\}_{i=1}^N$.
Our method, named \method{}, accomplishes the goal with an extremely lightweight prompt block $\mathcal{P}$ with a few trainable parameters $\Theta_p$, which utilizes the implicit equilibrium layer for activation. We only insert two prompt blocks in front of the input and head layers, respectively. Suppose that the output of the backbone is $z_i = \mathcal{F}(x_i;\Theta_f)$, two prompt-based blending representations can be written as:
\begin{equation}
\label{eq:p1}
    \tilde{x}_i = \alpha_1 x_i + \beta_{1} \mathcal{P}_{1}(x_i;\Theta_{p1}),
\end{equation}
\begin{equation}
\label{eq:p2}
\tilde{z}_i = \alpha_{2} \mathcal{F}(\tilde{x}_i;\Theta_f) + \beta_{2} MLP\left(\mathcal{P}_{2}(x_i;\Theta_{p2})\right).
\end{equation}
Here, $MLP$ represents the fully-connected layer for representation projection. $\alpha_i$ and $\beta_i$ represent balance coefficients determined based on their importance in the attention mechanism~\cite{mnih2014recurrent}. To generate these balance coefficients, we initialize two adjustable parameters, $g_{{\alpha}_i}$ and $g_{{\beta}_i}$, subject to the following constraints: ${\alpha}_i + {\beta}_i = 1$ and $0 < {\alpha}_i, {\beta}_i < 1$.
\begin{equation}
    \alpha_i = \frac{e^{g_{\alpha_i}}}{e^{g_{\alpha_i}}+e^{g_{\beta_i}}},  \beta_i = \frac{e^{g_{\beta_i}}}{e^{g_{\alpha_i}}+e^{g_{\beta_i}}}
\end{equation}
In this way, the task-specific knowledge of the downstream data is distilled and effectively incorporated into the trainable parameter set: $\Theta_{t} = Concat(\Theta_{p1} || \Theta_{p2} || \Theta_{h})$, activating the pre-trained model. 
The prompt-based blending representation promotes a balance between the original representations and the learned prompt, resulting in adaptive control. Our prompt blocks perform a generic architectural modification that enables the pre-trained vision model to be adapted to a downstream task with only a few additional parameters $\Theta_{t}$.

Intuitively, \method{} is illustrated in Figure \ref{fig:framework}. The whole process can be listed as follows: \textbf{i)} Blend the downstream input image with the vision prompt using an adaptive coefficient by feeding the downstream input image into the equilibrium layer. \textbf{ii)} Feed the combined image to the frozen pre-trained model to get the feature representations. \textbf{iii)} Input the downstream image into the equilibrium layer to produce vision prompts, and map the prompts to the representation size to create a combination. \textbf{iv)} Add a fully-connected layer to the top layers of the pre-trained model for the final prediction.

\subsection{Implicit Prompt Design}
The Deep Equilibrium (DEQ) Model, as described in \cite{bai2019deep}, employs a single layer that finds the fixed point of an iterative procedure. This layer, capable of expressing the entire deep network as an equilibrium computation, is just as powerful as multiple stacked explicit layers. Our proposed method, \method{}, leverages this capability by using a single DEQ layer to adapt a pre-trained vision model to downstream tasks through implicit layer training. The downstream inputs are equipped with a single implicit layer, implemented as a ResNet layer~\cite{he2016deep}, which is trained to learn task-specific prompts while the pre-trained model remains frozen.
Drawing inspiration from previous studies \cite{islam2021broad,lin2017refinenet,yosinski2014transferable}, we aim to improve the feature representation by utilizing high-frequency and low-frequency information as the vision prompt. To attain this goal while reducing the parameter storage burden, we propose using two-level prompt blocks, located prior to the input layer and the head layer. Our design of this lightweight architecture is supported by demonstrations of its convergence ability in Theoretical Analysis.

Unlike a conventional network where the output is the activation from the $L$ th layer, the output of an equilibrium layer is the equilibrium point itself. Therefore, the forward evaluation could be any procedure that solves for this equilibrium point. Conventional deep neural networks, if they converge to equilibrium, can be considered a form of fixed-point iterations for the forward process:
\begin{equation}
    z^{*} = \mathcal{P}(z^{*}, x; \Theta_p)
\end{equation}
Our goal will be to compute the vector-Jacobian product $\frac{\partial z^{*}(\cdot)}{\partial(\cdot)}^T y$ for some vector $y$, where $(\cdot)$ here is a stand-in for any quantity we want to differentiate the fixed point with respect to (i.e., the input $x$, or any parameters of the function $\mathcal{P}$, both of which of course will affect the final fixed point $z^*$). Since this vector-Jacobian product is the key aspect to integrating these DEQ layers within backpropagation, such a routine allows us to integrate the DEQ layer within standard automatic differentiation tools.

The derivation of the vector-Jacobian product largely mirrors that in previous sections, but we include the full derivation again here for completeness. Differentiating both sides of the fixed point solution, we have:
\begin{equation}
    \frac{\partial z^{*}(\cdot)}{\partial(\cdot)} = \left(I - \frac{\partial \mathcal{P}(z^{*},x)}{\partial(z^*)}\right)^{-1} \frac{\partial \mathcal{P}(z^{*},x)}{\partial(\cdot)}
\end{equation}

In order to calculate the vector-Jacobian product, we need the following information:
\begin{equation}
    \left(\frac{\partial z^{*}(\cdot)}{\partial(\cdot)}\right)^T y = \left(\frac{\partial \mathcal{P}(z^{*},x)}{\partial(\cdot)}\right)^T \left(I - \frac{\partial \mathcal{P}(z^{*},x)}{\partial(z^*)}\right)^{-T}y 
\end{equation}

The critical term of interest here is the solution in a linear system, and we can utilize the proxy $o = (I - \frac{\partial \mathcal{P}(z^{*},x)}{\partial(z^*)})^{-T}y$ to solve the fixed point equation and compute the final Jacobian vector product.

\subsection{Robust Training}

We utilize the robust training mechanism for trainable parameters $\Theta_t$ motivated by \cite{frankle2018lottery} to overcome the instability during prompt tuning. The Lottery Ticket Hypothesis in it posits that only a subset of a model's parameters is crucial for achieving good generalization, the rest are likely to overfit. To address this, we employ a criterion to separate crucial and non-crucial parameters and optimize them differently. Crucial parameters are updated with stochastic gradient descent, while non-crucial parameters are constrained to reduce their ability to overfit.

Recent pruning methods~\cite{frankle2018lottery,yeom2021pruning,xia2020robust} suggest that crucial parameters should have substantial magnitudes, as they play a key role in network propagation. Furthermore, early optimization shows that parameters with large gradients tend to contribute to generalized patterns~\cite{molchanov2019importance}, which are crucial for learning from clean samples. Therefore, the parameters' values and gradients should be considered when determining their importance. To capture this, we use the product of their values and gradients as the criterion for determining criticality. In mathematical terms, the criticality of parameter $\theta_t$ from $ \Theta_t = \{\theta_t^i \}_{i=1}^M$ is represented as:

\begin{equation}
	z(\theta_t) = \left|\frac{\partial \mathcal{L}}{\partial{\theta_t}}\cdot \theta_t \right|,
\end{equation}
where $\mathcal{L}$ represents the loss function.

The equation shows that when the gradient or value of a parameter is close to zero, its criticality is low, making it a non-crucial parameter that is prone to overfit. On the other hand, when the value of $z(\theta_t)$ is immense, $\theta_t$ is considered a crucial parameter for learning basic and generalized patterns. To control the number of crucial parameters, we introduce a threshold $\tau $, such that crucial parameters are selected and represented as:

\begin{equation}\label{eq:phi_c}
	\Theta_t^c = \{\theta_t | z(\theta_t) \geq \tau   \},
\end{equation}
\begin{equation}\label{eq:phi_n}
	\Theta_t^n = \{\theta_t | z(\theta_t) < \tau    \}.
\end{equation}

Unlike pruning methods, we do not eliminate the non-crucial parameters $\Theta_t^n$. Instead, we adopt a distinct optimization strategy. Here, the crucial parameters are updated in the usual way, while the non-crucial parameters are restricted to converge to zero for better generalization. Mathematically, the update rule for $\theta_t \in \Theta_t^c$ is represented as:

\begin{equation}\label{eq:13}
	\theta_t \leftarrow \theta_t - \eta \frac{\partial \mathcal{L}}{\partial{\theta_t} },
\end{equation}

The symbol $\eta$ denotes the learning rate in the above equation. For the non-crucial parameters, we shrink them utilizing strict regularization instead of minimizing the loss. In other words, the update rule for $\theta_t \in \Theta_t^n$ is:

\begin{equation}\label{eq:14}
	\theta_t \leftarrow \theta_t - \eta sign(\theta_t).
\end{equation}

\subsection{Optimization}

For our training process, we keep the pre-trained parameters $\Theta_f$ intact and only modify a limited set of parameters $\Theta_t$. This selective update of parameters makes our \method{} modular and efficient - it allows us to utilize an existing pre-trained vision model without having to modify or re-train it. Instead, we add a small number of additional parameters specific to each task,  which can be formulated as:
\begin{equation}
    \theta_t^* = \mathop{argmin}\limits_{\theta_t} \frac{1}{|\mathcal{D}|} \sum_{i=1}^N \ell \left( \mathcal{H}(\tilde{z}_i), y_i \right)
\end{equation}
With the pre-trained model frozen, we minimize the prediction error using cross-entropy (CE) loss. The ability of our \method{} to adapt a pre-trained vision model to a wide range of tasks while maintaining a high level of accuracy makes it a desirable solution for deployment in cloud services. The potential benefits of reduced computational and storage overhead, as well as the ability to offer real-time adaptation to new tasks, make our method an ideal choice for cloud service providers seeking to improve their offerings.

\subsection{Theoretical Analysis}
\label{sec:theory}

\noindent\textbf{Theoretical setup.} Our proposed \method{} aims to provide prompts to the vision input $x \in \mathbb{R}^d$ and the representation $z \in \mathbb{R}^h$ derived by the backbone. 
The whole network maps $x$ to the label $y \in \mathbb{R}$ with the loss $l(y,\hat{y})$ like cross-entropy, etc. 
We consider the network to be $f_{\theta}(x) = v^{\mathrm{T}}\sigma(Wx)$, where $v \in \mathbb{R}^k$, $W \in \mathbb{R}^{k \times d}$, and $\sigma$ is an element-wise activation function like ReLU. 
Assume that the random variables $x, y \sim P$ and the population loss can be denoted as 
$\mathcal{L}(\theta) = \mathbb{E}\left[l(f_{\theta}(x),y)\right] $.

Note that we have only two prompt blocks in different positions and architectures. We show that we can directly add the vision prompts to the first layer of the pre-trained model, while it cannot be implemented on the last layer. We should utilize another MLP block to ensure the optimal solution. Here we assume the $\sigma$ to be ReLU: $\sigma(x_i) = max(x_i,0)$. Given that the model is pre-trained with parameters $\hat{\theta} :(\hat{v}_{pre}, \hat{W}_{pre})$ which reach the optimal solution $\mathcal{L}(\hat{v},\hat{W}) = 0$. With the single-layer DEQ as the vision prompts network, we can derive the vision prompts with the suppose that: $x_{pro} = A x$ and $z_{pro} = B z$ for some invertible matrix $A, B$, where the corresponding label is unchanged: $y_{pro} = y$.

\begin{proposition}
   There exists the vision prompt $x_{pro} = A x$ for invertible $A$ and $y_{pro} = y$ that can minimize the population loss: $min_{W}\mathcal{L}(\hat{v}, W) = 0$. However, the vision prompt $z_{pro} = B z$ may not be sufficient: there exists such $B$ such that the population loss is non-zero for any choice of the parameter $v$: $min_{v}\mathcal{L}(v, \hat{W}) > 0$.
\end{proposition}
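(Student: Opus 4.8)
The plan is to treat the two claims separately, since they have opposite conclusions. For the first (positive) claim about the input prompt $x_{pro}=Ax$: the pre-trained network satisfies $\hat v^{\mathrm T}\sigma(\hat W_{pre}x)=y$ with zero population loss, i.e.\ $\sigma(\hat W_{pre}x)$ carries exactly the information needed to recover $y$ through the fixed readout $\hat v$. I would show that after applying an invertible $A$ to the input, one can choose $W = \hat W_{pre}A^{-1}$, so that $\hat v^{\mathrm T}\sigma(W x_{pro}) = \hat v^{\mathrm T}\sigma(\hat W_{pre}A^{-1}Ax) = \hat v^{\mathrm T}\sigma(\hat W_{pre}x) = y$ almost surely, hence $\mathcal L(\hat v, W)=0$ and a fortiori $\min_W \mathcal L(\hat v, W)=0$. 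The key point is that the prompt lives \emph{before} the trainable matrix $W$, so the trainable layer can simply undo $A$; this is why the DEQ prompt can be injected directly at the input. I would remark that this argument is robust to the exact activation as long as it is the same $\sigma$ used in pre-training.

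For the second (negative) claim about the representation prompt $z_{pro}=Bz$: here the prompt is applied \emph{after} the only nonlinearity, at the level of the hidden feature $z=\sigma(\hat W x)$, and only the linear readout $v$ remains trainable. So the network's output on the prompted representation is $v^{\mathrm T}(Bz) = (B^{\mathrm T}v)^{\mathrm T}z$; as $v$ ranges over $\mathbb R^k$ and $B$ is invertible, $B^{\mathrm T}v$ also ranges over all of $\mathbb R^k$, so one might naively think nothing is lost. The subtlety I would exploit is dimensional: the proposition's setup has the backbone producing $z\in\mathbb R^h$ while the trainable readout in the theoretical model is $v\in\mathbb R^k$, and $\sigma$ need not be surjective onto its range, so the set of achievable hidden vectors $\{\sigma(\hat W x): x\sim P\}$ is constrained (e.g.\ nonnegative-orthant-valued for ReLU). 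I would pick $B$ to be, say, a reflection that maps part of that achievable set outside the nonnegative orthant, or more cleanly, construct a concrete low-dimensional instance: take $d=k=1$, $\hat W_{pre}=1$, so $z=\max(x,0)\ge 0$ and $y=\hat v z$; now set $B=-1$, so $z_{pro}=-\max(x,0)\le 0$, and the requirement $y_{pro}=y=\hat v\max(x,0)$ would force $v\cdot(-\max(x,0)) = \hat v\max(x,0)$ for all $x$, i.e.\ $v=-\hat v$ — which actually works in this toy case, so I must instead break it with a rank/parity obstruction: choose $B$ so that $Bz$ no longer spans the same subspace relevant to $y$, e.g.\ when $\sigma$ has a kernel or the map $x\mapsto\sigma(\hat Wx)$ is not linear, $B$ can rotate the image of $P$ so that no linear functional of $Bz$ agrees with the (piecewise-linear) target $y$ on a positive-measure set. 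Concretely I would exhibit $x\mapsto z$ whose image is not a linear subspace, pick $B$ rotating it, and show that fitting $y$ with a single linear functional of $Bz$ forces a contradiction on two regions of input space simultaneously, giving $\min_v\mathcal L(v,\hat W)>0$.

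The main obstacle, as the toy computation above shows, is that the bare claim ``a linear prompt $B$ after a linear readout cannot be absorbed'' is false whenever everything is jointly linear — the negative result genuinely needs the nonlinearity $\sigma$ between pre-training and the prompt, together with a mismatch (dimension, sign-constraint, or non-subspace image) that $B$ can violate but $v$ cannot repair. So the real work is selecting a clean witness: I expect the cleanest route is to take ReLU, use that the reachable hidden features lie in (a subset of) the nonnegative orthant, choose $B$ with a negative eigenvalue on a direction that the data actually excites, and then argue that any single linear readout $v$ of $Bz$ must get the sign wrong on a positive-probability event, so $\mathcal L(v,\hat W)\ge c>0$ for every $v$. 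I would close by noting this is exactly why \method{} routes the second prompt through an extra $MLP$ rather than adding it directly: the $MLP$ restores the expressive slack (a fresh nonlinearity plus free weights) that a bare invertible $B$ followed by the frozen linear head lacks.
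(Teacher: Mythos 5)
Your first half is correct and is exactly the paper's argument: set $W=\hat{W}A^{-1}$ so that $\hat{v}^{\mathrm T}\sigma(WAx)=\hat{v}^{\mathrm T}\sigma(\hat{W}x)=y$, hence $\min_W\mathcal{L}(\hat v,W)=0$. No issues there.

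The second half has a genuine gap, and your own toy computation exposes it. You place the prompt \emph{after} the only nonlinearity, so the prompted model is $v^{\mathrm T}(Bz)=(B^{\mathrm T}v)^{\mathrm T}z$ with $z=\sigma(\hat W x)$. Under that reading the negative claim is false for \emph{every} invertible $B$: since the pre-trained model already realizes $y=\hat v^{\mathrm T}z$ exactly, the choice $v=B^{-\mathrm T}\hat v$ gives zero loss, and no ``rotation of the image'', rank, sign, or non-subspace obstruction can help, because $\{x\mapsto v^{\mathrm T}Bz : v\in\mathbb{R}^k\}$ is literally the same function class as $\{x\mapsto v^{\mathrm T}z : v\in\mathbb{R}^k\}$. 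So the salvage you sketch (pick $B$ with a negative eigenvalue, argue any linear readout gets the sign wrong on a positive-probability event) cannot be completed; the problem is not in finding a cleverer witness $B$ but in the architecture you assumed. The paper's proof avoids this by keeping the \emph{frozen} nonlinear layer after the prompt: the prompted representation $z_{pro}=Bz$ is fed through $\sigma(\hat W\,\cdot\,)$ with only the final linear $v$ trainable, i.e.\ the output is $v^{\mathrm T}\sigma(\hat W B z)$. Taking $B=-I$ and using that the relevant pre-activations $\hat W z$ are entrywise positive on the data, the ReLU maps everything to $0$, so the output is identically zero for every $v$ and the loss is bounded away from zero, giving $\min_v\mathcal{L}(v,\hat W)>0$. (Even the paper is terse about why $\hat W z>0$ holds, but the structural point --- prompt injected \emph{before} a frozen ReLU so that a sign flip lands in the dead zone --- is the missing idea in your proposal.) Your closing intuition about why \method{} adds an $MLP$ after the second prompt is consistent with the paper's discussion, but it does not substitute for the counterexample.
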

\begin{proof}
    Let $\hat{B}, \hat{v}$ can reach the optimal solutions so that $y = \hat{v}\sigma(\hat{W}x)$ for all $x,y$. Let $W = \hat{W} A^{-1}$, we have for all $x_{pro}$
    \begin{equation}
        \hat{v}\sigma(W x_{pro}) =  \hat{v}\sigma(\hat{W} A^{-1} A x) = \hat{v}\sigma(\hat{W} x) = y
    \end{equation}
    Therefore, the parameters $\hat{v},W$ achieves $\mathcal{L}(\hat{v}, W) = 0$.

    Following is a counterexample showing that last-layer prompts are impossible. Since $\sigma$ is the element-wise ReLU function, $\hat{W}z$ has only positive entries for all $z$. Let $B = -I$, which is an invertible diagonal matrix full of -1. Then for any $v$, we have $v \sigma(\hat{W}z_{pro}) = v\sigma(-\hat{W}x) = 0$, so the expected loss is positive. Therefore, $min_{v}\mathcal{L}(v, \hat{W}) > 0$.
\end{proof}

\begin{table*}[t]
\centering
\begin{tabular}{llcccccccc}
\toprule[1.5pt]
\multicolumn{1}{c}{Backbone} & \multicolumn{1}{l}{Method} & CIFAR10 & CIFAR100 & \multicolumn{1}{l}{ImageNet100} & Flower & Dogs & Cars & Clothing & Params \\ \midrule
\multirow{7}{*}{ResNet-50} & Retraining & 0.8281 & 0.5178 & 0.7088 & 0.8649 & 0.7942 & 0.6898 & 0.7649 & 23.529 \\
 & Head-tuning & 0.7627 & 0.4738 & 0.6167 & 0.8531 & 0.7713 & 0.6257 & 0.7324 & 0.277 \\
 & Fine-tuning & 0.8564 & 0.5271 & 0.7194 & 0.8942 & 0.8126 & 0.7548 & 0.7852 & 23.529 \\
 & Adapter & 0.8375 & 0.6021 & 0.7185 & 0.8702 & 0.8279 & 0.6816 & 0.7633 & 0.673 \\
 & Bias & 0.8319 & 0.5965 & 0.7003 & 0.8694 & \textbf{0.8316} & 0.7371 & 0.7803 & 0.494 \\
 & VPT & 0.8547 & \textbf{0.6289} & 0.7257 & 0.8876 & 0.8147 & 0.7459 & 0.7870 & 0.812 \\
 & \method{} & \textbf{0.8628} & 0.5484 & \textbf{0.7372} & \textbf{0.8976} & 0.8269 & \textbf{0.7642} & \textbf{0.7892} & 0.097 \\ \hline
\multirow{7}{*}{ResNet-101} & Retraining & 0.8357 & 0.5365 & 0.7275 & 0.8691 & 0.8015 & 0.7014 & 0.7892 & 43.713 \\
 & Head-tuning & 0.7689 & 0.4941 & 0.6287 & 0.8654 & 0.7786 & 0.6386 & 0.7641 & 0.461 \\
 & Fine-tuning & 0.8745 & 0.5487 & 0.7469 & 0.8989 & 0.8168 & 0.7715 & 0.8119 & 43.713 \\
 & Adapter & 0.8456 & 0.6233 & 0.7382 & 0.8745 & 0.8325 & 0.6895 & 0.7893 & 0.684 \\
 & Bias & 0.8517 & 0.6148 & 0.7249 & 0.8721 & \textbf{0.8421} & 0.7598 & 0.7894 & 0.517 \\
 & VPT & 0.8723 & \textbf{0.6319} & 0.7470 & 0.8898 & 0.8198 & 0.7581 & 0.8092 & 0.838 \\
 & \method{} & \textbf{0.8830} & 0.5898 & \textbf{0.7492} & \textbf{0.9024} & 0.8311 & \textbf{0.7762} & \textbf{0.8165} & 0.097 \\ \bottomrule[1.5pt]
\end{tabular}
\caption{The performance of \method{} and existing tuning baselines on six classification tasks using CNN-based pre-trained models. Params represents the maximum number of parameters that can be trained. The unit of measurement for Params is M.}
\label{tab:cnn_based}
\end{table*}

\subsection{Complexity Analysis}
We also compare our complexity with several baselines to demonstrate our superiority. For example, in the case of the ViT model, there are $N^2$ visual tokens as the input, each with a dimension of $d$. We first examine MLP-based methods, such as Adapter and Bias, which require $2d\tilde{d} \cdot L$ additional trainable parameters in $L$ layers for the projection from dimension $d$ to $\tilde{d}$. Next, for VPT, it requires $nd$ additional parameters in each layer due to the insertion of $n$ prompts, which needs $nLd$ trainable parameters in total. Our \method{} only add two prompt blocks with $m\tilde{d}$ parameters ($m \ll d$). 
In practice, our proposed \method{} has demonstrated significant advantages over Adapter and VPT, even with only slightly fewer parameters during the training stage in the below. 
This is due to the unique way in which \method{} utilizes the input data, which allows it to make more efficient use of the available parameters and achieve better results. Additionally, \method{} has a more flexible architecture that allows it to adapt to different types of input data, making it more versatile and applicable to a wide range of tasks. 

\begin{table*}[htbp]
\centering
\begin{tabular}{@{}llcccccccc@{}}
\toprule[1.5pt]
Backbone & \multicolumn{1}{l}{Method} & CIFAR10 & CIFAR100 & ImageNet100 & Flower & Dogs & Cars & Clothing & Params \\ \midrule
\multirow{7}{*}{ViT-B} & Retraining & 0.8761 & 0.5592 & 0.7277 & 0.8735 & 0.8145 & 0.7165 & 0.7959 & 85.721 \\
 & Head-tuning & 0.7914 & 0.5124 & 0.6431 & 0.8617 & 0.8019 & 0.6522 & 0.7671 & 0.187 \\
 & Fine-tuning & 0.9035 & 0.6499 & 0.7544 & 0.9003 & 0.8298 & 0.7934 & 0.8356 & 85.721 \\
 & Adapter & 0.8612 & 0.6319 & 0.7411 & 0.8777 & 0.8317 & 0.6934 & 0.8229 & 0.372 \\
 & Bias & 0.8898 & 0.6109 & 0.7326 & 0.8709 & 0.8348 & 0.7295 & 0.8210 & 0.215 \\
 & VPT & 0.9049 & \textbf{0.6689} & 0.7596 & 0.9013 & \textbf{0.8367} & 0.7682 & 0.8378 & 0.523 \\
 & \method{} & \textbf{0.9077} & 0.6541 & \textbf{0.7612} & \textbf{0.9054} & 0.8361 & \textbf{0.7991} & \textbf{0.8397} & 0.124 \\ \midrule
\multirow{7}{*}{Swin-B} & Retraining & 0.8896 & 0.5730 & 0.7316 & 0.8794 & 0.8357 & 0.7233 & 0.8112 & 86.954 \\
 & Head-tuning & 0.7991 & 0.5265 & 0.6558 & 0.8775 & 0.8150 & 0.6614 & 0.7739 & 0.295 \\
 & Fine-tuning & 0.9166 & 0.6631 & 0.7710 & 0.9056 & 0.8359 & 0.8016 & 0.8398 & 86.954 \\
 & Adapter & 0.8795 & 0.6511 & 0.7498 & 0.8812 & 0.8341 & 0.6952 & 0.8277 & 0.331 \\
 & Bias & 0.8971 & 0.6118 & 0.7401 & 0.8749 & 0.8442 & 0.7567 & 0.8261 & 0.287 \\
 & VPT & 0.9132 & \textbf{0.6816} & \textbf{0.7781} & 0.9026 & 0.8393 & 0.7982 & \textbf{0.8434} & 0.686 \\
 & \method{} & \textbf{0.9189} & 0.6705 & 0.7769 & \textbf{0.9061} & \textbf{0.8455} & \textbf{0.8027} & 0.8431 & 0.242 \\ \bottomrule[1.5pt]
\end{tabular}
\caption{The performance of \method{} and existing tuning baselines on six classification tasks using Transformer-based pre-trained models. Params represents the maximum number of parameters that can be trained. The unit of measurement for Params is M.}
\label{tab:transformer}
\end{table*}

\section{Experiment}

\subsection{Experimental Setting}
 
\noindent\textbf{Dataset.} \textbf{CIFAR10}~\cite{krizhevsky2009learning} is a dataset of 60,000 color images in 10 classes, with 6,000 images per class. There are 50,000 training images and 10,000 test images.
\textbf{CIFAR100}~\cite{krizhevsky2009learning} is a dataset of the same size as CIFAR10, but it has 100 classes containing 600 images each. There are 500 training images and 100 testing images per class.
\textbf{ImageNet100}~\cite{deng2009imagenet} is a subset of the ImageNet dataset containing 100 classes of natural images. Each class has between 500 and 1000 images for training and 50 to 100 for testing.
\textbf{Flower}~\cite{nilsback2008automated} is a dataset of images of flowers from 5 different species. It contains 4242 images, with 80-90 images per class. 
\textbf{Stanford Dogs}~\cite{khosla2011novel} is a dataset of images of 120 breeds of dogs, with a total of 20,580 images. 
\textbf{Stanford Cars}~\cite{gebru2017fine} is a dataset of cars, with a total of 16,185 images of 196 classes of cars. The dataset is organized by make, model, and year.
\textbf{Clothing}~\cite{tanaka2018joint} is a dataset containing images of various clothing types.

\noindent\textbf{Baselines.}
We compare \method{} to several commonly used protocols, including: 1) Retraining trains the entire vision model from scratch;
2) Head Fine-tuning fine-tunes the last layers of a pre-trained model while freezing the remaining layers and retraining the head classifier;
3) Fine-tuning adjusts the weights of a pre-trained model and retrains the head classifier;
4) Adapter~\cite{houlsby2019parameter} adds a new adapter structure to the transformer and updates only its parameters;
5) Bias~\cite{zaken2021bitfit} updates only the bias terms of the parameters;
6) VPT~\cite{jia2022visual} fine-tunes the model by incorporating prompts as input tokens.

\noindent\textbf{Implementation Details.}
We use the model pre-trained on ImageNet as the initialization for the following tuning for a fair comparison. Additionally, we extend our method to include CNN-based (ResNet-50, ResNet-101~\cite{he2016deep}) and Transformer-based (ViT~\cite{dosovitskiy2020image}, Swin Transformer~\cite{liu2022video}) backbones. The implementation of baselines for additional backbones involves leveraging the core idea presented in the original paper, and adapting it to suit the specific capabilities of each new backbone architecture. In experiments on the datasets above, we utilize the Adam optimizer with a momentum of 0.9, batch size of 64, and learning rate of 1e-5. The whole experiments are implemented on the NVIDIA V100 GPU with PyTorch.

\begin{table*}[htbp]
\centering
\scalebox{1}{
\setlength{\tabcolsep}{1.3mm}{
\begin{tabular}{ll|cccc|cccc|c}
\toprule[1.5pt]
\multirow{2}{*}{Backbone} & \multirow{2}{*}{Method} & \multicolumn{2}{c}{CIFAR10-LongTail} & \multicolumn{2}{c|}{CIFAR100-LongTail} & \multicolumn{4}{c|}{Fine-Grained Few-Shot} & \multirow{2}{*}{Params} \\  \cmidrule(lr){3-4}\cmidrule(lr){5-6}\cmidrule(lr){7-10}
 &  & IR100 & IR50 & IR100 & IR50 & Pets & Food-101 & Cars & Flower &  \\ \hline
\multirow{4}{*}{ResNet-50} & Head-tuning & 0.7136 & 0.7358 & 0.3569 & 0.3891 & 0.6881 & 0.6013 & 0.3846 & 0.7218 & 0.277 \\
 & Fine-tuning & 0.7638 & 0.7962 & 0.4157 & 0.4468 & 0.7340 & 0.6531 & 0.4184 & 0.7735 & 23.529 \\
 & VPT & 0.7721 & 0.7956 & 0.4192 & 0.4510 & 0.7385 & 0.6522 & 0.4189 & 0.7790 & 0.812 \\
 & \method{} & \textbf{0.7831} & \textbf{0.8114} & \textbf{0.4328} & \textbf{0.4699} & \textbf{0.7412} & \textbf{0.6584} & \textbf{0.4203} & \textbf{0.7814} & 0.097 \\ \hline
\multirow{4}{*}{ResNet-101} & Head-tuning & 0.7268 & 0.7496 & 0.3751 & 0.4036 & 0.7027 & 0.6159 & 0.4008 & 0.7542 & 0.461 \\
 & Fine-tuning & 0.7754 & 0.8083 & 0.4309 & 0.4681 & 0.7593 & 0.6694 & 0.4325 & 0.8058 & 43.713 \\
 & VPT & 0.7881 & 0.8068 & 0.4356 & 0.4710 & 0.7603 & 0.6715 & 0.4351 & 0.8134 & 0.838 \\
 & \method{} & \textbf{0.8021} & \textbf{0.8294} & \textbf{0.4662} & \textbf{0.4982} & \textbf{0.7635} & \textbf{0.6783} & \textbf{0.4470} & \textbf{0.8175} & 0.097 \\ \hline
\multirow{4}{*}{ViT-B} & Head-tuning & 0.6849 & 0.7304 & 0.3871 & 0.4315 & 0.7245 & 0.6447 & 0.4139 & 0.7691 & 0.187 \\
 & Fine-tuning & 0.7692 & 0.7834 & 0.4778 & 0.5243 & 0.7719 & 0.6834 & 0.4498 & 0.8217 & 85.721 \\
 & VPT & 0.7631 & 0.7806 & 0.4785 & 0.5254 & 0.7731 & 0.6894 & 0.4524 & 0.8412 & 0.523 \\
 & \method{} & \textbf{0.7829} & \textbf{0.8008} & \textbf{0.5024} & \textbf{0.5517} & \textbf{0.7822} & \textbf{0.6911} & \textbf{0.4588} & \textbf{0.8507} & 0.124 \\ \hline
\multirow{4}{*}{Swin-B} & Head-tuning & 0.6947 & 0.7485 & 0.4118 & 0.4568 & 0.7324 & 0.6529 & 0.4309 & 0.7814 & 0.295 \\
 & Fine-tuning & 0.7852 & 0.7995 & 0.4826 & 0.5491 & 0.7786 & 0.6764 & 0.4625 & 0.8386 & 86.954 \\
 & VPT & 0.7725 & 0.8011 & 0.4910 & 0.5482 & 0.7797 & \textbf{0.6979} & 0.4692 & 0.8497 & 0.686 \\
 & \method{} & \textbf{0.8006} & \textbf{0.8231} & \textbf{0.5276} & \textbf{0.5708} & \textbf{0.7901} & 0.6976 & \textbf{0.4773} & \textbf{0.8526} & 0.242 \\ \bottomrule[1.5pt]
\end{tabular}}}
\caption{Results of extensive experiments on the long-tailed, and few-shot datasets to validate the ability against class imbalance and sample scarcity. IR represents the imbalance ratio and the unit of measurement for Params is M.}
\label{tab:longtail}
\end{table*}

\subsection{Performances}
\noindent\textbf{Image Classification.}
Quantitative results can be seen in Table \ref{tab:cnn_based} and Table \ref{tab:transformer}.
It can be observed that the proposed \method{} performs the best overall on all six tasks when using all four models as the backbones. The best performance for each task is highlighted in bold. For example, on CIFAR100, \method{} achieves an accuracy of 54.84 $\%$ when using ResNet-50, and 58.98 $\%$ when using ResNet-101, the highest among all the methods. For the transformer-based (i.e., ViT-B and Swin-B) methods, \method{} achieves accuracy improvement of 1.36 $\%$, 0.68 $\%$, 0.51 $\%$, 3.31 $\%$, 0.38 $\%$ and 0.19 $\%$ over other tuning methods, respectively on CIFAR10, CIFAR100, ImageNet100, Flower, Dogs, and Cars datasets. Regarding the maximum number of trainable parameters, the proposed method has the lowest value among all the methods, with only 0.097 M trainable parameters.
It is also worth noting that the performance of the other methods varies depending on the task and the backbone model used. For example, while VPT performs well on CIFAR100, it is not as effective on other tasks. On the other hand, the adapter method performs relatively well on the Flower and Dogs tasks, but not as well on the other tasks.
In summary, the proposed \method{} is the most effective tuning method across all tasks and backbone models, with the advantage of having the lowest number of trainable parameters.

\noindent\textbf{Long-tail Class Distribution.}
We perform experiments on benchmark datasets that have a long-tail class distribution, such as CIFAR10-LongTail and CIFAR100-LongTail. The results of the imbalance ratio 50 and 100 are shown in Table \ref{tab:longtail}.
Our proposed \method{} algorithm outperforms the best baseline, VPT, under all the settings. Specifically, we observe a gain of approximately 4\% in validation accuracy, while reducing the trainable parameters by 88\%. This trend holds across other settings as well.
Additionally, when compared with VPT on long-tailed CIFAR-10 with an imbalanced ratio of 100 under ViT-B, \method{} achieves superior validation accuracy using only 4.2x fewer trainable parameters. When evaluated under Swin-B, \method{} outperforms VPT on long-tailed CIFAR-10 with imbalanced ratios of 50 and 100, while also reducing the trainable parameters by 64.7\%.

\noindent\textbf{Few-shot Learning.}
We perform experiments on benchmark datasets, such as Pets~\cite{parkhi2012cats}, Food-10~\cite{bossard2014food}, Cars~\cite{gebru2017fine}, and Flower~\cite{nilsback2008automated} with eight examples per class,  which are widely used for evaluating few-shot learning algorithms.
Our experimental results, shown in Table \ref{tab:longtail}, demonstrate that \method{} achieves state-of-the-art results on average, while using the fewest trainable parameters. It is worth noting that although VPT outperforms our method on some datasets for the image classification task, we still achieve the best performance on all datasets in this scenario, which highlights our method's advantage. These observations confirm the capability and efficiency of our method in the low-data regime and further verify the effectiveness of the lightweight implicit vision prompt design.

\begin{table}[htbp]
\centering
\scalebox{1}{
\begin{tabular}{llccc}
\toprule[1.5pt]
 & Method & CIFAR10 & CIFAR100 & INet100 \\ \hline
\multirow{4}{*}{\rotatebox{90}{ResNet}} & \method{}-P1 & 0.8501 & 0.5349 & 0.7146 \\
 & \method{}-P2 & 0.8239 & 0.5136 & 0.6989 \\
 & \method{}-R & 0.8516 & 0.5356 & 0.7197 \\
 & \method{} & 0.8628 & 0.5484 & 0.7372 \\ \hline
\multirow{4}{*}{\rotatebox{90}{ViT-B}} & \method{}-P1 & 0.8876 & 0.6413 & 0.7492  \\
 & \method{}-P2 & 0.8527 & 0.6304 & 0.7186 \\
 & \method{}-R & 0.8896 & 0.6309 & 0.7437 \\
 & \method{} & 0.9077 & 0.6541 & 0.7612 \\ \bottomrule[1.5pt]
\end{tabular}}
\caption{Ablation study on important inner modules under two different backbones. INet100 represents ImageNet100.}
\label{tab:ablation}
\end{table}

\subsection{Ablation Study}
With the ImageNet pre-trained ResNet-50 and Vit-B, we perform extensive ablation studies to analyze the developed \method{} systematically. We introduce three model variants as follows:
(1) \textbf{\method{}-P1} removes the equilibrium layer before the pre-trained backbone to activate low-level features, i.e., $\mathcal{P}_{1}$ in the Eq. \ref{eq:p1}.
(2) \textbf{\method{}-P2} removes the equilibrium layer behind the pre-trained backbone to activate high-level features, i.e., $\mathcal{P}_{2}$ in the Eq. \ref{eq:p2}.
(3) \textbf{\method{}-R} removes the robust training mechanism with the standard optimization for all the parameters, i.e., optimization in Eq. \ref{eq:phi_c} and Eq. \ref{eq:phi_n}.
The results of these model variants are summarized in Table \ref{tab:ablation}. We have the following observations. First, our \method{} outperforms \method{}-$\mathcal{P}_{1}$ and \method{}-$\mathcal{P}_{2}$, which indicates that implicit vision prompt blocks work for vision semantic information activation. 
Second, \method{}-$\mathcal{P}_{2}$ obtains much worse than \method{}, showing that the high-level activation is vital for the vision prompt tuning.  
Third, the robust training mechanism allows better network optimization, as shown by \method{}-R's lower performance of 2\% compared to \method{}, validating the superiority of robust training for better optimization.

\subsection{Sensitivity Analysis}
In Figure \ref{fig:sensi}, we investigate the sensitivity of two hyper-parameters: $\tau$ in the robust training mechanism, which separates the crucial and non-crucial parameters in Eq. \ref{eq:phi_c} and Eq. \ref{eq:phi_n}. Moreover, we also investigate the number of the layer in DEQ by stacking the single layer attracted by its striking performances. 
We first vary $\tau$ in \{0.2, 0.4, 0.6, 0.8\} with other parameters fixed. 
As $\tau$ rises, the performance first increases and then decreases a little. The potential reason is that too large $\tau$ could filter the essential parameters for optimization.
We can observe that the performance of ours is not sensitive to $\tau$ in the range of $[0.4,0.6]$, and we can set it to any value in that interval. Further, we changed the number of layers from 1 to 4 with other parameters fixed. Obviously, our method can achieve a slight performance gain with the layer ranging from $1$ to $4$ but at the cost of several times the computation time. Therefore, $\tau$ and the layer number are set to $0.4$ and $1$ as default, respectively.

\begin{figure}[t]
    \centering
    \includegraphics[width=0.43\textwidth]{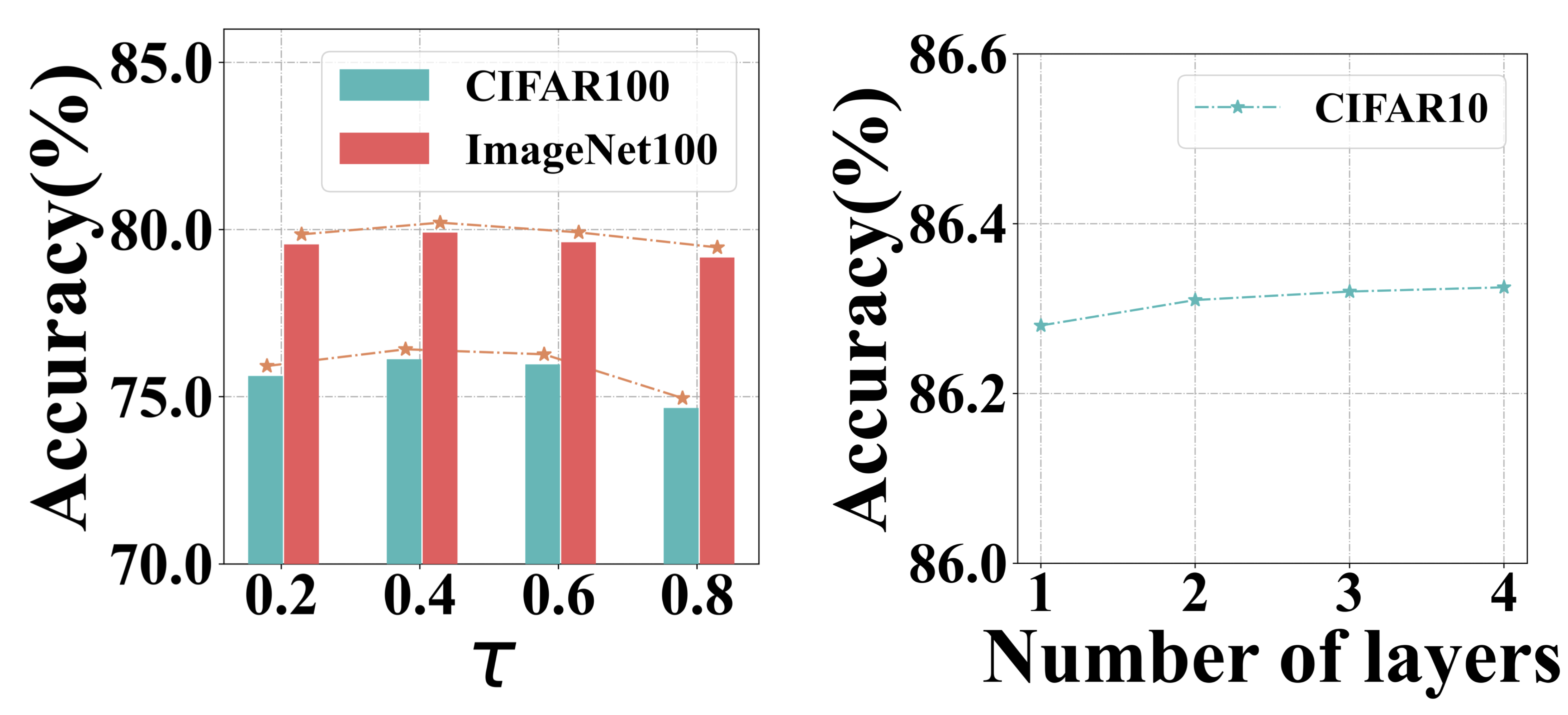}
    \caption{Sensitivity analysis on two hyper-parameters.}
    \label{fig:sensi}
\end{figure}

\section{Conclusion}
In conclusion, we propose an efficient vision tuning method named \method{} that addresses the heavy computational costs. By drawing inspiration from deep implicit models with stable memory costs, \method{} only requires two equilibrium implicit layers in two ends of the pre-trained main backbone with parameters frozen. Additionally, pruning the parameters in these two layers according to the lottery hypothesis reduces training parameters. \method{} can obtain higher performance with a smaller parameter size than the state-of-the-art baseline VPT, especially under challenging scenes. Our experiments demonstrate that \method{} has a good generalization performance, making it an easy way to boost applications in the future. Overall, \method{} provides an economical solution for vision tasks and is promising for a wide range of datasets.

\section*{Acknowledgements}
Zhouchen Lin was supported by the National Key R\&D Program of China (2022ZD0160302), the NSF China (No. 62276004), and the major key project of PCL, China (No. PCL2021A12).

\bibliography{aaai24}

\end{document}